\documentclass[10pt, a4paper]{article}
\usepackage[final]{lrec2026}
\usepackage{times}
\usepackage[utf8]{inputenc}
\usepackage[T1]{fontenc}
\usepackage{graphicx}
\usepackage{booktabs}
\usepackage{amsmath}
\usepackage{amsthm}
\usepackage{amssymb}
\usepackage{tabularx}
\usepackage{svg}
\usepackage{multirow}
\usepackage{soul}
\usepackage{algorithm}
\usepackage{algpseudocode}
\newtheorem{lemma}{Lemma}
\newtheorem{theorem}{Theorem}
\newtheorem{definition}{Definition}

\usepackage{microtype}
\usepackage[mode=buildnew]{standalone}
\usepackage{tikz,pgf}
\usetikzlibrary{matrix, fit, arrows.meta,shapes.geometric, arrows,shadows,decorations.pathreplacing,fit, positioning}

\newcommand{\mbf}[1]{\textbf{#1}}
\newcommand{\cont}{\mathrm{Cont}}

\title{LLM-based Atomic Propositions Help Weak Extractors:\\ Evaluation of a Propositioner for Triplet Extraction}

\name{Luc Pommeret\textsuperscript{1}, Thomas Gerald\textsuperscript{1}, Patrick Paroubek\textsuperscript{1}, \\ \large\textbf{Sahar Ghannay\textsuperscript{1}, Christophe Servan\textsuperscript{2}, Sophie Rosset\textsuperscript{1}}}

\address{$^{1}$Université Paris-Saclay, CNRS, LISN, 91400, Orsay, France \\$^{2}$ AMIAD, Pôle Recherche, 91120, Palaiseau, France \\
         surname.name@lisn.fr}
\abstract{
Knowledge Graph construction from natural language requires extracting structured triplets from complex, information-dense sentences. In this paper, we investigate if the decomposition of text into \textit{atomic propositions} (minimal, semantically autonomous units of information) can improve the triplet extraction.
We introduce \texttt{MPropositionneur-V2}, a small multilingual model covering six European languages trained by knowledge distillation from \texttt{Qwen3-32B} into a \texttt{Qwen3-0.6B} architecture, and we evaluate its integration into two extraction paradigms: entity-centric (\texttt{GLiREL}) and generative (\texttt{Qwen3}). Experiments on SMiLER, FewRel, DocRED and CaRB show that atomic propositions benefit weaker extractors (\texttt{GLiREL}, \texttt{CoreNLP}, 0.6B models), improving relation recall and, in the multilingual setting, overall accuracy. For stronger LLMs, a fallback combination strategy recovers entity recall losses while preserving the gains in relation extraction. These results show that atomic propositions are an interpretable intermediate data structure that complements extractors without replacing them.
\\
\newline \Keywords{atomic propositions, knowledge graph, triplet extraction, 
OpenIE, propositioner, multilingual NLP}
}

\begin{document}
\maketitleabstract

\begin{figure*}[t]
    \centering
    \includegraphics[width=\textwidth]{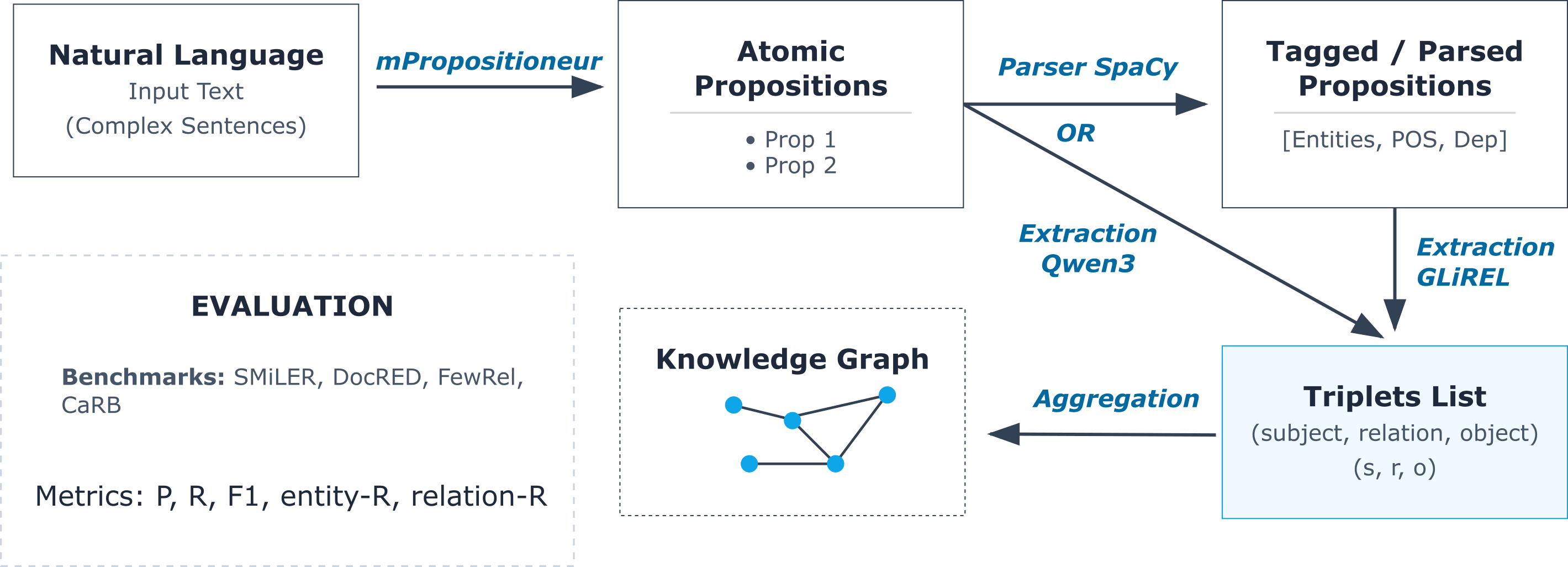}
    \caption{The upper schema depicts the pipeline's stages: at stage 1, we extract atomic propositions from the source text; at stage 2, we extract triplets using either a dependency parser or generative LLMs; and finally, we build the knowledge graph from the entities and relations retrieved. For evaluation, we limit to the triplet entity-relation benchmark.}
    \label{fig:pipeline}
\end{figure*}

\section{Introduction}


The interpretability of Natural Language Processing (NLP) models is a requirement for applications like fact-checking and automated construction of Knowledge Graphs (KGs). While neural models have achieved state-of-the-art results, their internal mechanisms remain opaque. Most current explainability methods are \textit{post hoc}, seeking to explain decisions after the training. 

In this paper, we argue for a shift towards interpretability by design, where the data structure itself is traceable. Our approach is based on 
natural language (NL)
\textit{atomic propositions}, a minimal, semantically autonomous unit of information. 
For defining and producing NL atomic propositions, we rely on the formalism of Semantic Information Theory\cite{CarnapBarHillel}, which defines the decomposition of non-atomic propositions into atomic ones \footnote{In logic, atomic propositions are closed formulae that cannot be further decomposed into several smaller propositions without loss of semantic integrity or introduction of "bad" informational cuts (see Section~\ref{sec:formalism})}.
For instance,  the sentence "The cat and the dog are in the kitchen" atomizes into two NL atomic propositions: "The cat is in the kitchen" and "The dog is in the kitchen". Neither of these two propositions can be decomposed further, because otherwise we would add to the original semantic content hallucinations (spurious information). If the original proposition was not a conjunction but a disjunction ("The cat \textbf{or} the dog'') decomposing it would introduce new information since we would have a disjunction of two atomic propositions each one holding more semantic information than the original text.
More details are available in section~\ref{sec:formalism}. 

For the implementation of the sentence decomposition process into NL atomic propositions, we rely in this first experiment on a limited-depth recursive 
prompting of a multilingual LLM, an
atomic proposition being defined as a prompting fixed-point.
To perform the atomisation, we propose a small multilingual model, the \texttt{MPropositionneur-V2}, that can perform coreference resolution to produce atomic propositions from text.
For more details, please see section~\ref{sec:pipeline}.


We aim to show that in Open Information Extraction (OpenIE), using NL atomic propositions can improve performance while preserving interpretability. Here, the interpretability is the auditability of the data structure, i.e. the NL atomic propositions.
These propositions serve as an intermediary representation in the process of mapping a natural language sentence $S$ to a set of formal triplets $\{ (s, r, o) \}$, where $s$ is the subject entity, $o$ the object entity, and $r$ the relation.
In our experiment, we evaluate whether splitting textual information into atomic propositions could positively impact triplet extraction. We hypothesize that extracting entities and relations could be made easier from less information-dense text chunks, especially NL atomic propositions. 

\textbf{Our contributions} consist of a new multilingual propositioner and a pipeline leveraging atomic propositions to enhance entity-relation extraction based on LLM.

We report the evaluation of the multilingual propositioner trained through distillation of a large language model (LLM), compared to a baseline composed of a rule-based method leveraging a dependency parser.
To assess the effectiveness of our model and the extraction pipeline, we evaluate the method on different entity-relation benchmarks for relation extraction and triplet validation.
These benchmarks cover a variety of tasks, such as open and closed information extraction, sentence- and document-based extraction, and extraction based solely on relations and triplets.


The paper is organised as follows: First, in Section  \ref{sec:sota}, we review related works, describing previous triplet extraction approaches and methods based on propositioners. Section \ref{sec:formalism}  presents the formalism of the atomic proposition. Section \ref{sec:pipeline}  describes our pipeline in depth. We then present the experimental protocol to answer the research question in Section \ref{sec:protocol}. Section \ref{sec:results}  discusses the different results, and Section \ref{sec:conclusion} concludes with suggestions for future work.

\section{Related Works}
\label{sec:sota}

For inference and/or information retrieval \cite{xiang2026usegraphsragcomprehensive}, it is common to represent information as a Knowledge Graph (KG). While a wide range of KGs automatically extracted from many different sources is available, for instance, using wikidata taxonomy and entities \cite{waagmeester2020wikidata,hassanzadeh2021building}, extracting KGs from natural language source of information remains a significant challenge \cite{waagmeester2020wikidata}. 
We can split the wide range of approaches for extracting automatically KG into two categories.

The first one performs named entity recognition (NER), and then applies entity linking and relation extraction approaches to build a knowledge graph.
For example, in the sentence 'The Eiffel Tower is in Paris', the entities must be linked to the knowledge graph (Paris is entry Q90 in Wikidata) and the relation must be linked to a vocabulary of relations (here, hasLocation, property P131 in Wikidata) \cite{ClosedIE}. 
However, this approach lacks flexibility when dealing with the intrinsic complexity of natural language. 

The second kind of approach is called Open Information Extraction (OpenIE), and it aims to extract triplets $(s,r,o)$\footnote{(subject, relation, object)}, without relying on predefined sets of entities and/or relations \cite{OpenIE}.

To extract triplets from sentences, modern approaches use LLMs, especially transformer encoder or decoder models. 
Models like \texttt{mREBEL} \citep{mrebel} generate triplets directly from complex input sentences. 
Such generative models often struggle with long-range dependencies, nested clauses, and coordination, leading to lower recall on complex structures.
More recently, \texttt{GLiREL} \citep{glirel} performs zero-shot relation classification by jointly encoding entity pairs and candidate relation labels.
It is more robust but can be sensitive to the quality of the initial entity recognition.
One way to address this duality is through the recourse of sentence simplification for relation extraction.




Simplifying text before extracting relations has been explored, and, for instance, \cite{miwa} proposed an entity-focused sentence simplification method to improve relation extraction.
More recently, \cite{niklaus} introduced a rule-based sentence simplification system that rewrites complex sentences into simpler sentences for Open Information Extraction. 
However, these approaches rely on syntactic rules or dependency parsing, are limited to a single language, and lack coreference resolution. 

Simplification by structural atomisation of information is of interest for many tasks: in Retrieval Augmented Generation (RAG), the indexing of atomic propositions reduces the noise in dense retrieval~\cite{Chen2024}; in Natural Language Inference (NLI), context augmentation by atomic propositions increases performance~\cite{Stacey2023}; in fact-checking, it is possible to decompose the text and verify each atomic proposition recursively~\cite{Min2023}; and finally, in Summary Evaluation, this method allows one to approach human judgement~\cite{DBLP:conf/coria/HerserantG25}. 

Recently, \cite{Min2023} proposed the use of LLM-based atomic propositions in NLP.
The atomic propositions approach combines cutting the information into small sentence pieces (the smallest we can, without loss of information, see Section~\ref{sec:formalism}) and coreference resolution.
One advantage of atomic propositions is the \textit{structure}. 
This approach gives a fixed structure to the information we want to verify, which is more easily parseable and has been used for Information Retrieval \cite{Chen2024} and Summary Evaluation \cite{DBLP:conf/coria/HerserantG25}. 





Based on these recent works, we propose using atomic propositions to construct knowledge graphs.   
We propose using a propositioner based on a distilled multilingual language model to perform the atomic propositions extraction. This model handles simplification and coreference resolution across six languages and is grounded in a formal framework.
Once the propositions are extracted, the text input is flattened. Then, we experiment with different methods to extract entity-relation triplets. For each triplet, we produce a graph that links the entities with the relations produced by the model.


\section{Formal Framework}
\label{sec:formalism}

The abstraction underlying our objectives is the theoretical atomic proposition.
To enlighten the atomization process, we use the formalism of Semantic Information Theory \cite{CarnapBarHillel}. 
This formalism provides a strong understanding of information in terms of signification and gives a criterion for cutting a proposition in a way that preserves information.

\subsection{Information Content}
\label{InfoContent}
Let $\phi$ be a formula. We define $I(\phi) = -\log_2(\frac{|\cont(\phi)|}{|W|})$, where $\cont(\phi)$ is the set of worlds\footnote{A world $W$ is a determined assignment of truth values for each atomic subformula of $\phi$.} satisfying $\phi$.
A cut of $\phi$ into $\psi$ is \textbf{safe} if $\psi$ is a sub-formula of $\phi$ and $I(\phi) > I(\psi)$. It is \textbf{bad} if $I(\phi) \leq I(\psi)$.

\subsection{The CNF Condition}

We prove (in Annex \ref{sec:proofs}) that a proposition is atomic if and only if it is a clause in a Conjunctive Normal Form (CNF).
\begin{itemize}
    \item \textbf{Conjunctions ($A \land B$)}: Splitting into $A$ or $B$ is safe ($I(A \land B) > I(A)$).
    \item \textbf{Disjunctions ($A \lor B$)}: Splitting into $A$ is bad ($I(A) > I(A \lor B)$), as it "hallucinates" specificity.
\end{itemize}
Thus, our propositioner is designed to split conjunctions while preserving the integrity of disjunctive facts.

\subsection{BCP Paradox}

The BCP paradox states that a contradiction (like $A \land \neg A$) has infinite information in the extended reals $\overline{\mathbb{R}} = \mathbb{R} \cup \{-\infty, +\infty\}$:

\begin{align*}
    -\log_2(\frac{|\cont(A \land \neg A)|}{|W|}) = -\log_2(0) = +\infty
\end{align*}

However, for clauses of a CNF, this paradox cannot arise because they cannot be contradictions \cite{logique}. Therefore, atomic propositions cannot have infinite information.

In the current implementation, we expect the model to follow the previous formal framework. However, we have only performed a manual and partial evaluation. If this is positive, we cannot guarantee that this will be the case for all extracted propositions. Additional experiments will be conducted outside the scope of this work.

\section{Proposed Approach}
\label{sec:pipeline}

The aim of this paper is to show that triplet extraction could benefit from atomic propositions.
We define different stages to extract triplets, with a full pipeline depicted in Figure~\ref{fig:pipeline}. The global pipeline consists of three stages:

\begin{enumerate}
    \item \textbf{Atomization}: The complex text is processed by \texttt{MPropositionneur-V2}. This model, distilled from \texttt{Qwen3-32B} into a \texttt{Qwen3-0.6B} architecture, recursively splits the text until each proposition is stable and autonomous by using the prompt proposed in Figure \ref{fig:PromptTemplateStage1}.
    \item \textbf{LLM prompting}: We use the \texttt{Qwen3-4B} model to generate triplets directly from an atomized chunk using a dedicated prompt (Figure \ref{fig:PromptTemplateStage2})
    \item \textbf{KG Building}: Extracted triplets are aggregated into a Knowledge Graph, where nodes represent entities and edges represent relations.
\end{enumerate}

As a baseline, we replace stage 2 with two sub-stages by using the \textbf{Parsing} and \textbf{Triplet Extraction} as follows:

\begin{itemize}
    \item[2.1.] \textbf{Parsing}: Each atomic proposition is parsed (here using \texttt{SpaCy} or \texttt{Stanza}) to extract part-of-speech (POS) tags, named entity tags, and dependency trees.
    \item[2.2.] \textbf{Triplet Extraction}: A large language model or a neural based pattern matcher (for example \texttt{GLiREL}) extracts triplet candidates $(s, r, o)$ from the simplified, parsed atoms.
\end{itemize}

\begin{figure}[h]
    \centering
    \begin{tikzpicture}[
    font=\ttfamily,
    every node/.style={draw=black!40,rounded corners, align=left, text width=8cm,inner sep=5pt},
    optional/.style={dashed, fill=blue!15},
    mandatory/.style={fill=orange!15},
    iins/.style={fill=green!15}
    ]

    
    \node[mandatory] (system) {
    You are an expert in disambiguation  \\
and information extraction. \\

    };
    
    \node[iins, below=0.1cm of system] (instruction) {
        You must decompose the text into  \\
atomic propositions (single facts)  \\
that are FULLY AUTONOMOUS. \\

ABSOLUTE RULES: \\
1. ZERO PRONOUNS: "He", "She", "They",  \\
   "His", "Her", "Its", "This one"  \\
   ARE FORBIDDEN. \\
   ALWAYS replace them with the  \\
   full name of the entity. \\
2. CONTEXT: Each sentence must be  \\
   readable alone without knowing  \\
   its source. \\
3. REPETITION: Repeat the subject  \\
   in EACH sentence. \\
    };



    \node[fill=black!10,below=0.1cm of instruction] (input) {
OUTPUT FORMAT: Only a JSON array of \\
strings. \\
\vspace{0.10in}
Title: {title} \\
Content: {content} \\
Output: \\
    };
\end{tikzpicture}
    \caption{Prompt Template used for the distillation of the propositioner used in stage 1.}
   \label{fig:PromptTemplateStage1}
   \vspace{-0.2cm}

\end{figure}

Figure~\ref{fig:Example} illustrates the input and the output of the entire pipeline.


\begin{figure}[h]
    \centering
    \begin{tikzpicture}[
    font=\ttfamily,
    every node/.style={draw=black!40,rounded corners, align=left, text width=8cm,inner sep=5pt},
    optional/.style={dashed, fill=blue!15},
    mandatory/.style={fill=orange!15},
    iins/.style={fill=green!15}
    ]

    

    
    \node[iins] (instruction) {
    Extract all factual  \\
(subject, predicate, object) \\
triples from the sentence. \\
One triple per line in the format: \\
subject | predicate | object \\
No explanations. If no triple can be  \\
extracted, write nothing. \\
    };


    \node[fill=black!10,below=0.1cm of instruction] (input) {
        Sentence: {text}\\
    };
\end{tikzpicture}
    \caption{Prompt Template used for the stage 2.}
   \label{fig:PromptTemplateStage2}
   \vspace{-0.2cm}

\end{figure}

\begin{figure}[h]
    \centering
    \begin{tikzpicture}[
    font=\ttfamily,
    every node/.style={draw=black!40,rounded corners, align=left, text width=8cm,inner sep=5pt},
    optional/.style={dashed, fill=blue!15},
    mandatory/.style={fill=orange!15},
    iins/.style={fill=green!15}
    ]

    
    \node[fill=black!10,below=0.1cm] (input) {
           Input: \texttt{"Marie Curie, a Polish-born physicist, won the Nobel Prize in Physics."}\\
    };
    \node[optional, below=0.1cm of input] (atomic) {
        Atomic Props: \texttt{["Marie Curie is a physicist.", "Marie Curie was born in Poland.", "Marie Curie won the Nobel Prize in Physics."]}\\
    };
        \node[optional, below=0.1cm of atomic] (tripled) {
        Parsed Triplets: \texttt{(Marie Curie, occupation, physicist)}, \texttt{(Marie Curie, birthplace, Poland)}, \texttt{(Marie Curie, award, Nobel Prize in Physics)}.\\
    };
\end{tikzpicture}
    \caption{Example of a sentence input processed through the whole pipeline, the atomic output, and the triplets extracted.}
   \label{fig:Example}
   \vspace{-0.2cm}

\end{figure}

\section{Experimental Protocol}
\label{sec:protocol}

\subsection{Propositioner}

We train a propositioner\footnote{Available here : \url{https://huggingface.co/Zual/MPropositionneur-V2}}, \textit{i.e.} a model that transforms a text input into a list of atomic propositions, via knowledge distillation \cite{distillation}, using \texttt{Qwen3-32B} as the teacher model and \texttt{Qwen3-0.6B} as the student. 
The training data consist of chunks of Wikipedia articles in six European languages: English, French, Spanish, Italian, German, and Portuguese \cite{wikipedia}. We trained the models for 2 epochs, on an A6000 NVIDIA GPU. 

\subsection{Nat. Lang. Recursive Propositioner}
In Algorithm~\ref{alg:propositioner}, we describe the recursive propositioner method. This algorithm is designed to recursively apply the \texttt{MPropositionneur-V2} to all natural language propositions generated ($\mathcal{P}_{i}$) until either all have been proved to be a propositioner fixed-point or the recursion depth has reached an empirical threshold value (here $N=5$). In the latter case, we return only the subset of proved atomic propositions ($\mathcal{A}_{i}$). 
\begin{algorithm}
\caption{propositioner}
\label{alg:propositioner}
\begin{algorithmic}[1]
\Require $N=5$, $t$ is a text, $i \in \mathbb{N}$, $\mathcal{M}$ is the propositioner,
    \State $i \leftarrow 0$;   $\mathcal{P}_0 \leftarrow  \mathcal{M}(t)$; \\
$\mathcal{A}_0 \leftarrow  \{ p \in \mathcal{P}_{0},\hspace{0.5em} \{p\} = \mathcal{M}(p)\}$
    \While {($\mathcal{P}_i \neq \mathcal{A}_i) \wedge (i < N)$}
    \State $\mathcal{P}_{i+1} \leftarrow \bigcup_{x \in \mathcal{P}_i} \mathcal{M}(x)$
    \State $\mathcal{A}_{i+1} \leftarrow \{ x \in \mathcal{P}_{i+1},\hspace{0.5em} \{ x \} = \mathcal{M}(x)\}$ 
    \State $i \leftarrow i + 1$
    \EndWhile
\State $\textbf{return}$ $\mathcal{A}_i$
\end{algorithmic}
\end{algorithm}

\subsection{Datasets and Benchmark}
We evaluate our pipeline on the SMiLER dataset \cite{smiler} (a multi-domain relation extraction benchmark covering 14 languages and various relation types) where the entities are already known and the task is to find the relations between entity pairs, on FewRel dataset \cite{fewrel}, on DocRED \cite{docred} (a Document-based triplet extraction dataset) and on CaRB \cite{carb} (an openIE dataset for triplet extraction).

\subsection{Metrics}
To evaluate triplet extraction performances we will consider the following metrics:
\begin{itemize}
    \item Precision (P), Recall (R), and F1-Score. Area Under the Curve (AUC) is also used for the CaRB benchmark natively.
    \item Entity Recall: The percentage of gold entities found in the output (exact match with substring and macrostring accepted).
    \item Relation Recall: The accuracy of extracted triplets vs. gold standards, by mapping to the finite vocabulary of \texttt{GLiREL} using a semantic mapper (cosine similarity with BERT, threshold optimised on the dev set).
\end{itemize}

\begin{table*}[!ht]
\centering
\small
\begin{tabular}{@{}llccccccccc@{}}
\toprule
& & \multicolumn{3}{c}{GLiREL} & \multicolumn{3}{c}{Qwen3-0.6B} & \multicolumn{3}{c}{Qwen3-4B} \\
\cmidrule(lr){3-5}\cmidrule(lr){6-8}\cmidrule(lr){9-11}
Benchmark & Pipeline & Acc & e-rec & r-rec & Acc & e-rec & r-rec & Acc & e-rec & r-rec \\
\midrule
\multirow{14}{*}{\rotatebox{90}{SMiLER}}
  & English -- Direct   & 49.8 & 99.0  & 50.3 & 35.7 & 100.0 & 35.7 & 71.4 & 100.0 & 71.4 \\
  & English -- Prop     & 43.7 & 86.6  & 50.4 & 35.1 & 86.6  & 40.5 & 59.6 & 86.6  & 68.8 \\
  & English -- Comb     & \mbf{51.5}$^{\dagger}$ & \mbf{99.1}  & \mbf{51.9} & --  & --   & --  & -- & -- & -- \\
  & French -- Direct    & 65.2 & \mbf{99.8}  & 65.3 & 32.2 & 100.0 & 32.2 & 81.8 & 100.0 & 81.8 \\
  & French -- Prop      & 48.9 & 76.7  & 63.8 & 39.7 & 76.7  & 51.7 & 64.6 & 76.7  & 84.3 \\
  & French -- Comb      & \mbf{66.3} & \mbf{99.8}  & \mbf{66.4} & --  & --   & --  & -- & -- & -- \\
  & German -- Direct    & 59.2 & 99.6  & 59.4 & 22.9 & 100.0 & 22.9 & 76.5 & 100.0 & 76.5 \\
  & German -- Prop      & 49.2 & 85.1  & 57.8 & 46.3 & 85.1  & 54.5 & 68.7 & 85.1  & 80.8 \\
  & German -- Comb      & \mbf{59.4} & \mbf{99.7}  & \mbf{59.5} & --  & --   & --  & -- & -- & -- \\
  & Spanish -- Direct   & \mbf{46.5} & \mbf{99.6}  & 46.7 & 24.8 & 100.0 & 24.8 & 65.0 & 100.0 & 65.0 \\
  & Spanish -- Prop     & 38.5 & 79.2  & \mbf{48.6} & 27.4 & 79.2  & 34.6 & 58.9 & 79.2  & 74.3 \\
  & Spanish -- Comb     & \mbf{46.5} & \mbf{99.6}  & 46.7 & --  & --   & --  & -- & -- & -- \\
  & Portuguese -- Direct & 59.3 & \mbf{99.4} & 59.7 & 32.8 & 100.0 & 32.8 & 77.6 & 100.0 & 77.6 \\
  & Portuguese -- Prop   & 56.4 & 87.7 & 64.3 & 43.2 & 87.7  & 49.2 & 71.9 & 87.7  & 82.0 \\
  & Portuguese -- Comb  & \mbf{63.2}$^{\dagger}$ & \mbf{99.4}  & \mbf{63.5} & --  & --   & --  & -- & -- & -- \\
  & Italian -- Direct   & 63.9 & 99.4  & 64.3 & 36.2 & 100.0 & 36.2 & 81.8 & 100.0 & 81.8 \\
  & Italian -- Prop     & 54.1 & 82.2  & 65.8 & 41.8 & 82.2  & 50.8 & 69.5 & 82.2  & 84.5 \\
  & Italian -- Comb     & \mbf{67.7}$^{\dagger}$ & \mbf{99.6}  & \mbf{68.0} & --  & --   & --  & -- & -- & -- \\
\cmidrule(lr){2-11}
  & Macro-avg -- Direct & 57.3 & \mbf{99.5} & 57.6 & 30.8 & \mbf{100.0} & 30.8 & \mbf{75.7} & \mbf{100.0} & 75.7 \\
  & Macro-avg -- Prop   & 48.5       & 82.9 &  58.5 & \mbf{38.9} & 82.9  & \mbf{46.9} & 65.5 & 82.9  & \mbf{79.1} \\
  & Macro-avg -- Comb   & \mbf{59.1}$^{\dagger}$ & \mbf{99.5} & \mbf{59.3} & -- & -- & -- & -- & -- & -- \\
\midrule
\multirow{2}{*}{FewRel}
  & Direct & 48.7 & \mbf{100.0} & 48.7 & \mbf{42.2} & \mbf{100.0} & 42.2 & \mbf{67.7} & \mbf{100.0} & 67.7 \\
  & Prop   & 40.2 & 75.8  & \mbf{53.1} & 40.3 & 75.8  & \mbf{53.2} & 51.5 & 75.8  & \mbf{68.0} \\
  & Comb  & \mbf{50.0}$^{\dagger}$ & \mbf{100.0} & 50.0 & -- & -- & -- & -- & -- & -- \\
\bottomrule
\end{tabular}
\caption{Results on the SMiLER and FewRel benchmarks. $\dagger$ indicates a statistically significant improvement of Comb over Direct (bootstrap test, $p < 0.05$). Atomization significantly improves relation recall. But entity recall decreases. The accuracy increases for the small LLM (Qwen3-0.6B). Bold indicates the best result per extractor column for each benchmark/language. }
\label{tab:results_smiler_fewrel}
\end{table*}

\subsection{Baselines \& Configurations}
We compare the performance of direct pipeline (\texttt{GLiREL} or \texttt{Qwen3}) vs. \texttt{MPropositionneur-V2} + direct pipeline. Our hypothesis is that atomization reduces the syntactic noise that typically hinders relation extraction on complex sentences.

For the evaluation, we analyse three different configurations:

 \begin{itemize}
     \item \textbf{Direct}:  Triplets are directly extracted from source text.
     \item  \textbf{Prop}: Triplets are extracted only from the atoms produced by the propositioner.
     \item \textbf{Comb}: For SMiLER and FewRel benchmarks, since the entity oracle is known, if entities are found, the associated triples are saved; otherwise, the triples are extracted from atomic propositions. Comb is therefore a fallback method.
     \item \textbf{Union}: The triplets are extracted from atomic propositions and from the raw paragraph/document independently. Triplets from both the \textbf{direct} and \textbf{prop} pipelines are merged. The objective is to evaluate the contribution of atomic propositions to the direct pipeline.
 \end{itemize}

The baseline approach is a triplet extractor using \texttt{GLiREL}.
Then we compare the baseline with prompted approaches using LLMs, where models are queried to generate the triplets from either the source text (\textbf{direct}) or from propositions (\textbf{prop}). 
For prompt models, we selected \texttt{Qwen3-0.6B} and \texttt{Qwen3-4B} instruct models; these models are comparable in terms of size (number of weights) to the size of the propositioner model. 
It should be noted that these models were not fine-tuned for the triplet extraction task. Rather, a zero-shot instruction approach was employed (see the prompt used in Figure~\ref{fig:PromptTemplateStage2}).
 
\section{Results and Analysis}
\label{sec:results}

In this section we report and discuss the results of the designed experiments. We evaluate quantitatively the propositioner for the triplet extraction method. 
By flattening the text, relations are made explicit, allowing the extractors to capture facts that would otherwise be missed in complex sentences.

\subsection{Evaluation on Multilingual Triplet Extraction} 
We report in Table \ref{tab:results_smiler_fewrel} results in accuracy (number of triplet correctly extracted), entity-recall and relation-recall on both SMiLER and FewRel benchmarks compared to \texttt{GLiREL} approach. 

We also report results for the different configurations: "Direct", where models try to extract the triplet directly from the raw text;  "Prop", where the models are only considering the set of propositions to extract relations; "Comb", which is the combination of raw text and the list of atomic propositions.

First, looking at the Macro-avg, we can observe that in almost all cases, the number of correctly extracted relations benefits from the atomic propositions. While direct pipelines achieve better accuracy and entity recall, a combination of raw text and atomic propositions yields better performance for small models, demonstrating that both methods support the retrieval of different entities or relations. 
This combination of the two approaches has a positive impact on the proposition in the triplet extraction pipelines. 

However, within this pipeline we consider that we know a relation exists between two entities, and thus it corresponds to relations classification setting rather than a triplet extraction setting. From a model perspective, we show that a larger LLM-based approach is more powerful across all metrics (\texttt{Qwen3-4B}), although it comes at a higher computational and memory cost.

\paragraph{Evaluation on triplet extraction on documents.} Table \ref{tab:results_docred} presents results that favour proposition granularity for GLiREL, which achieves a higher F1 score when using the propositioner.
However larger models (\texttt{Qwen3-4B}) reach higher scores (precision, recall and F1). Thus, for larger documents, larger models are preferable to the propositioner.  We hypothesize that this difference in score could be reduced by considering the constructed graph and leveraging relation transitivity.
Deeper experiments should be conducted to verify this assumption.


\paragraph{Evaluation on triplet extraction on openIE context.} As shown in Table \ref{tab:results_carb}, we observe that using the propositioner upstream of CoreNLP (Prop method) improves recall and AUC, but not precision.
This drop in precision can be explained by the increased number of triplets extracted by the atomic propositions method. We observe that in OpenIE context, the union of propositions and original text is optimal for recall and AUC, having higher Recall and AUC for all pipelines.  

\begin{table*}[!ht]
\centering
\small
\begin{tabular}{@{}lccccccccc@{}}
\toprule
& \multicolumn{3}{c}{GLiREL} & \multicolumn{3}{c}{Qwen3-0.6B} & \multicolumn{3}{c}{Qwen3-4B} \\
\cmidrule(lr){2-4}\cmidrule(lr){5-7}\cmidrule(lr){8-10}
Pipeline & P & R & F1 & P & R & F1 & P & R & F1 \\
\midrule
Sentence & 3.7 & 9.4  & 5.3  & 22.6 & 8.3  & 12.1 & \mbf{36.6} & 13.3 & 19.5 \\
Document & 1.4 & 14.8 & 2.5  & 20.3 & \mbf{15.7} & \mbf{17.7} & 32.0 & \mbf{24.4} & \mbf{27.7} \\
Proposition & \mbf{5.9} & 6.9  & \mbf{6.3}  & \mbf{25.9} & 6.7  & 10.6 & 36.5 & 9.4  & 14.9 \\
Sentence+Proposition   & 3.8 & 12.3 & 5.8 & -- & -- & -- & -- & -- & -- \\
Document+Proposition   & 1.6 & \mbf{18.1} & 2.9       & -- & -- & -- & -- & -- & -- \\
\bottomrule
\end{tabular}
\caption{Results on the DocRED benchmark. Atomization significantly improves precision and F1 when using a weaker extractor. However, when using LLM extraction, performance decreases.}
\label{tab:results_docred}
\end{table*}

\subsection{Qualitative Results} 
\label{sec:qualitative}

Even with no errors in the pipeline components, it remains possible that some transitive relations present in the input text are absent from the knowledge graph built from the triplets. 

Due to the atomic splitting of information during intermediary representation translation, such relations may become latent. However, it is important to note that these propositions can nevertheless be recovered through the use of inference, which is supported by the formal properties of the atomic propositions.

\begin{table*}[!ht]
\centering
\small
\begin{tabular}{@{}lcccccccccccc@{}}
\toprule
& \multicolumn{4}{c}{CoreNLP} & \multicolumn{4}{c}{Qwen3-0.6B} & \multicolumn{4}{c}{Qwen3-4B} \\
\cmidrule(lr){2-5}\cmidrule(lr){6-9}\cmidrule(lr){10-13}
Pipeline & P & R & F1 & AUC & P & R & F1 & AUC & P & R & F1 & AUC \\
\midrule
Direct & \mbf{17.6} & 24.3 & 20.4 & 0.144 & \mbf{29.4} & 7.9  & 12.4 & 0.051 & \mbf{46.2} & 33.3 & \mbf{38.7} & 0.244 \\
Prop   & 16.4 & \mbf{31.3} & \mbf{21.5} & \mbf{0.182} & 24.0 & 11.1 & 15.2 & 0.069 & 31.6 & 35.0 & 33.2 & 0.230 \\
Union & -- & -- & -- & -- & 23.7 & \mbf{14.7} & \mbf{18.1} & \mbf{0.091} & 30.8 & \mbf{43.6} &
  36.1 & \mbf{0.285} \\
\bottomrule
\end{tabular}
\caption{Results on the CaRB benchmark. Atomization significantly improves recall and F1 when a weaker extractor is used. However, when using LLM extraction, performance decreases.}
\label{tab:results_carb}
\end{table*}

For instance, when the following sentence is entered: 

"\texttt{Šafov is a village and municipality (obec) in Znojmo District in the South Moravian Region of the Czech Republic.}",

the propositioner's output is as follows:

\texttt{[..., "Šafov is located in Znojmo District.", "Znojmo District is located in the South Moravian Region.", "The South Moravian Region is located in the Czech Republic.", ...]}\\

The corresponding graph, displayed in Figure~\ref{img:kg}, illustrates the aforementioned transitive relation. Precisely, while triplet (``Šafov'', ``hasLocation'', ``Czech Republic'') is not directly extracted from propositions,  we could deduce it by taking into account transitivity.

\begin{figure}[h!]
    \centering
    \includegraphics[width=0.40\textwidth]{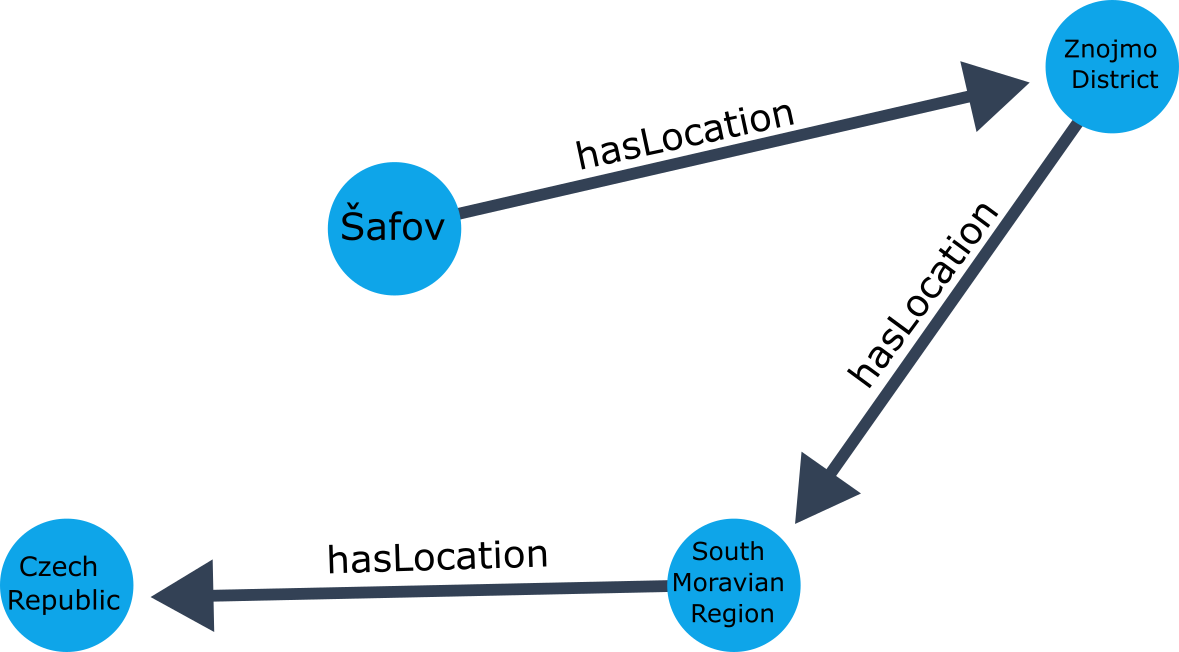}
    \caption{The Knowledge Graph built with triplets extracted by \texttt{GLiREL} on the atomic propositions for the above sentence.}
    \label{img:kg}
\end{figure}

\section{Conclusion}
\label{sec:conclusion}
In this work, we empirically demonstrate the benefits of the atomic proposition for triplet entity relation extraction. In particular, we show that decomposing documents or paragraphs into atoms helps retrieve the relation efficiently, improving performance on both the FewRel and SMiLER benchmarks. 

In addition, we observe that the combination of original input text with atomised text yields better performance across all metrics. The results obtained on both the CaRB and DocRED benchmarks validate the previous hypothesis, showing better recall performance.

However, future studies should be conducted to strengthen the methods, especially a human evaluation of the propositioner, to ensure that text units are indeed all atomic according to the definition (see Section~\ref{sec:formalism}). 

Additionally, we think that such triplet extraction methods and the constructed graph could be used in applications such as information retrieval systems.

We think that the interpretability of such text units (atoms) could provide a statistical basis for creating or deducing a logical rule, which could benefit the KG traversal algorithm. These research leads are left for further work.

\section{Limitations} 
One limitation of this study is the relevance of the recursive propositioner. To date, we have not compared propositions obtained with and without recursive refinement. The decision to use such an algorithm to refine atoms recursively was based on preliminary experiments in which we observed that some propositions were not atomic. In future works, we plan to compare the recursive propositioner to the non-recursive one.

Another limitation is the absence of evaluation using larger LLMs than \texttt{Qwen3-4B} for fair comparison and computational efficiency. Nonetheless, it is worth noticing that a larger model could be compared in future studies.

\clearpage
\section{Bibliographical References}
\bibliographystyle{lrec2026-natbib}
\bibliography{refs}


\appendix

\section{Prompts}
\label{sec:prompts}

\subsection{Evaluation Prompts}

\paragraph{Closed IE (SMiLER, FewRel, DocRED).}
The following prompt is used to classify the relation between two identified entities:

\begin{verbatim}
Given the text, identify the relation 
between the two entities.

Text: {text}
Entity 1: {e1}
Entity 2: {e2}

Choose exactly one relation from 
this list: {labels}

Answer with just the relation name, 
nothing else.
\end{verbatim}

\paragraph{Open IE (CaRB).}
The following prompt is used for open-domain triplet extraction:

\begin{verbatim}
Extract all factual 
(subject, predicate, object)
triples from the sentence.
One triple per line in the format:
subject | predicate | object
No explanations. If no triple can be 
extracted, write nothing.

Sentence: {text}
\end{verbatim}

\subsection{Propositioner Training Prompt}
\label{sec:prompt_propositioner}

The following prompt is used during the knowledge distillation training of \texttt{MPropositionneur-V2}. The teacher model (\texttt{Qwen3-32B}) is prompted to decompose a Wikipedia passage into fully autonomous atomic propositions, which are then used as targets for the student model (\texttt{Qwen3-0.6B}).

\begin{verbatim}
You are an expert in disambiguation 
and information extraction.
You must decompose the text into 
atomic propositions (single facts) 
that are FULLY AUTONOMOUS.

ABSOLUTE RULES:
1. ZERO PRONOUNS: "He", "She", "They", 
   "His", "Her", "Its", "This one" 
   ARE FORBIDDEN.
   ALWAYS replace them with the 
   full name of the entity.
2. CONTEXT: Each sentence must be 
   readable alone without knowing 
   its source.
3. REPETITION: Repeat the subject 
   in EACH sentence.

OUTPUT FORMAT: Only a JSON array of 
strings.

Title: {title}
Content: {content}
Output:
\end{verbatim}

\section{Proofs for the Formal Grounding}
\label{sec:proofs}

\begin{definition}[Safe Cut]
    A formula's cut $\phi$ in a formula $\psi$ is \textbf{safe} if $\psi$ is a sub-formula of $\phi$ and if $I(\phi) > I(\psi)$
\end{definition}

\begin{definition}[Bad cut]
    A cut of a formula $\phi$ in a formula $\psi$ is \textbf{bad} if $\psi$ is a sub-formula of $\phi$ and if $I(\phi) \leq I(\psi)$
\end{definition}

\begin{lemma}[Divisibility of Conjunction — Lemma~\label{lem:conj}]                                                                   
Let $\phi = A \land B$ where $A$ and $B$ are logically independent. Extracting the component $A$ is a \textbf{strictly safe} operation.
\end{lemma}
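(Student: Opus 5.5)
The plan is to unfold the definition of a safe cut and reduce the claim to a strict inclusion of model sets. First, $A$ is a sub-formula of $\phi = A \land B$ by construction, so the structural requirement of the Safe Cut definition holds trivially. It remains to show $I(A \land B) > I(A)$. Since $I(\phi) = -\log_2(|\cont(\phi)|/|W|)$ and $-\log_2$ is strictly decreasing on $(0,1]$, this is equivalent to $|\cont(A \land B)| < |\cont(A)|$. That is the inequality I will actually prove. The degenerate case $\cont(A\land B)=\emptyset$ would give $I=+\infty$ in $\overline{\mathbb{R}}$, which still satisfies the inequality as long as $I(A)$ is finite.

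For the inclusion, I will use the semantics of conjunction: $\cont(A \land B) = \cont(A) \cap \cont(B) \subseteq \cont(A)$. This gives $|\cont(A\land B)| \le |\cont(A)|$ immediately, and hence non-strict safety, which holds for any $A$, $B$.

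Strictness is where the independence hypothesis is needed, and it is the main obstacle. The paper never formalizes ``logically independent'', so I would fix the standard reading: all four combinations $A\land B$, $A\land\neg B$, $\neg A\land B$, $\neg A\land\neg B$ are satisfiable. Then I argue as follows.

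\begin{itemize}
\item Satisfiability of $A \land \neg B$ gives a world $w \in \cont(A)\setminus \cont(B)$.
\item Hence $w \in \cont(A) \setminus \cont(A \land B)$, so the inclusion is strict.
\item Since $W$ is finite (worlds are truth assignments to the finitely many atoms of $\phi$), strict inclusion of finite sets gives strict inequality of cardinalities.
\item Satisfiability of $A\land B$ gives $|\cont(A\land B)|\ge 1$. Both information values are therefore finite, which sidesteps the BCP paradox discussed above.
\end{itemize}

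Applying $-\log_2(\cdot/|W|)$ then yields $I(A\land B) > I(A)$.

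I would close with a remark on why independence cannot be dropped. If $A \models B$, then $\cont(A\land B)=\cont(A)$, so the cut carries no information loss but is not strict; by the definitions it is then a bad cut. This shows the hypothesis is necessary. The same argument applied to $B$ via satisfiability of $\neg A\land B$ gives the symmetric statement.
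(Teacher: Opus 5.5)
Your proof is correct and follows the same route as the paper's: $\cont(A\land B)=\cont(A)\cap\cont(B)$, independence makes the inclusion in $\cont(A)$ strict, and the strictly decreasing $-\log_2$ turns this into $I(A\land B)>I(A)$. Your version is tighter than the paper's, which justifies strictness only by ``monotonicity of $\mu$'' (monotonicity alone gives only $\le$); you instead obtain a world in $\cont(A)\setminus\cont(A\land B)$ from the satisfiability of $A\land\neg B$ and count on the finite $W$. One small overstatement: your closing remark shows only that the condition $A\not\models B$ cannot be dropped, not that full independence is necessary.
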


\begin{proof}
By definition of conjunction, $\cont(A \land B) = \cont(A) \cap \cont(B)$.
Because $A$ and $B$ are independent, $\cont(A \land B) \subsetneq \cont(A)$.
By monotonicity of $\mu$, we have $\mu(\cont(A \land B)) < \mu(\cont(A))$.
The function $-\log_2$ is strictly decreasing, so:
\[ I(A \land B) > I(A). \]
The information content of $A$ is strictly less than that of $A \land B$ : the operation is strictly safe.
\end{proof}

\begin{lemma}[Indivisibility of Disjunction — Lemma~\label{lem:disj}]
Let $\phi = A \lor B$ where $A$ and $B$ are logically independent. Extracting the component $A$ is a \textbf{bad} operation.
\end{lemma}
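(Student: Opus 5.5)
The plan mirrors the proof of Lemma~\ref{lem:conj}, with the inclusion reversed. By the semantics of disjunction, $\cont(A \lor B) = \cont(A) \cup \cont(B)$. Hence $\cont(A) \subseteq \cont(A \lor B)$, and the goal is to make this inclusion strict. Once it is strict, monotonicity of the measure $\mu$ (for finitely many worlds, cardinality $|\cdot|$ normalised by $|W|$) gives
\[ \mu(\cont(A)) < \mu(\cont(A \lor B)). \]
Since $-\log_2$ is strictly decreasing, it follows that $I(A) > I(A \lor B)$. In particular $I(\phi) \leq I(A)$, and $A$ is a sub-formula of $\phi$, so the cut of $\phi$ into $A$ is \textbf{bad} in the sense of the definition of a bad cut.

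The main step is justifying strictness, i.e.\ exhibiting a world in $\cont(B) \setminus \cont(A)$. I will read logical independence of $A$ and $B$ the way the proof of Lemma~\ref{lem:conj} implicitly does: all four truth-value combinations of $(A,B)$ are realised by some world. In particular, some world $w$ satisfies $\neg A \land B$. Then $w \in \cont(A \lor B)$ but $w \notin \cont(A)$, which gives $\cont(A) \subsetneq \cont(A \lor B)$.

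I expect the main obstacle to be exactly this definitional point. If independence were only taken to mean that neither formula entails the other, I would still get $B \not\models A$. That alone yields a world satisfying $B$ and $\neg A$, which is all the argument needs, so either reading works. The remaining care is about degenerate cases. If $\cont(A) = \emptyset$, then $I(A) = +\infty$ (the BCP situation), but the inequality $I(A) > I(A \lor B)$ still holds in $\overline{\mathbb{R}}$ because $\cont(A \lor B) \ni w$ is nonempty. If $A \lor B$ is a tautology, then $I(A \lor B) = 0$, which is again harmless. Finally, I would remark that the lemma actually proves the strict inequality, which is stronger than the non-strict one required for badness.
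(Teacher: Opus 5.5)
Your proof is correct and is essentially the paper's argument: $\cont(A) \subsetneq \cont(A \lor B)$, monotonicity of the measure, and the strict decrease of $-\log_2$ give $I(A) > I(A \lor B)$, so the cut is bad. What you add is an explicit justification of the strict inclusion, using independence to produce a world satisfying $\neg A \land B$; the paper simply asserts this step.
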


\begin{proof}
By definition of disjunction, $\cont(A \lor B) = \cont(A) \cup \cont(B)$.
We have the strict inclusion $\cont(A) \subsetneq \cont(A \lor B)$, hence $\mu(\cont(A)) < \mu(\cont(A \lor B))$, which yields:
\[ I(A) > I(A \lor B). \]
The information content of $A$ is strictly greater than that of $A \lor B$: decomposing a disjunction hallucinates a more specific fact than what was originally stated.
\end{proof}

\paragraph{Case of implication~\label{lem:impl}.}
Cutting $A \to B$ into $A$ is bad, because $A \to B \equiv \lnot A \lor B$, and Lemma~\ref{lem:disj} applies directly by substitution.

\begin{theorem}[Structural characterisation — Theorem\label{thm:atomicite}]
A formula $\phi$ is \textbf{atomic} if and only if it is logically equivalent to a \textbf{clause} (finite disjunction of literals).
\end{theorem}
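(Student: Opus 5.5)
Since the paper never states a formal definition of ``atomic'', the first step fixes one in the spirit of Section~\ref{sec:formalism}: a formula $\phi$ is \emph{atomic} if it admits no safe cut into strictly weaker pieces that jointly recover it. Precisely, there is no decomposition $\phi \equiv \psi_1 \land \dots \land \psi_k$ with $k \ge 2$ and each $\psi_j$ satisfying $I(\psi_j) < I(\phi)$. Any such decomposition could be replaced by $\phi$ itself without loss, so this is exactly the non-lossy splitting the propositioner performs. I also assume $\phi$ is neither a tautology nor a contradiction, consistent with the BCP discussion where clauses are taken to be non-contradictory. The proof then works semantically, identifying $\phi$ with $\cont(\phi) \subseteq W$ and using that $I$ is strictly decreasing in $|\cont(\cdot)|$, as in the proofs of Lemma~\ref{lem:conj} and Lemma~\ref{lem:disj}.

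($\Leftarrow$) Let $\phi \equiv \ell_1 \lor \dots \lor \ell_m$ be a non-tautological clause. Then $\cont(\phi)$ is the complement of a single world-cylinder: the set $F$ of worlds that falsify every $\ell_i$. Suppose $\phi \equiv \psi_1 \land \psi_2$. Then $\cont(\psi_1) \cap \cont(\psi_2) = W \setminus F$, so each countermodel in $F$ falsifies some $\psi_j$. The key step is to pick a ``minimal'' falsifying assignment, namely the partial assignment making all $\ell_i$ false, and show that one $\psi_j$ must already be false on all of $F$. The natural route is a prime-implicate argument. Every implicate of a clause $C$ is subsumed by $C$ itself, which is the standard fact that the prime implicates of a single non-tautological clause are just $\{C\}$. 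Since each $\psi_j$ is implied by $\phi$, every clause in a CNF of $\psi_j$ is a superclause of $C$. If some $\psi_j$ has only tautological clauses, it is trivial and contributes nothing. Otherwise $\psi_j$ is implied by $C$ and implies a superclause of $C$. I expect to need an additional combinatorial step to conclude that some $\psi_j \equiv \phi$, hence $I(\psi_j) = I(\phi)$, so the cut is not safe. That step would use the fact that the superclauses' complements $F'$ are cylinders contained in $F$, and their union must cover $F$. Because $F$ is itself a cylinder, covering it forces a subcylinder equal to it only if some $\psi_j$ contains the clause $C$ itself. This is where care is required.

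($\Rightarrow$) Take $\phi$ to be non-trivial and put it in CNF, minimised to prime implicates $C_1 \land \dots \land C_k$ with no clause subsuming another. If $k = 1$, $\phi$ is a clause. If $k \ge 2$, cut $\phi$ into $C_1$ and $C_2 \land \dots \land C_k$. Irredundancy gives $\cont(\phi) \subsetneq \cont(C_1)$ and likewise for the rest, which is the independence hypothesis of Lemma~\ref{lem:conj}. That lemma then yields $I(\phi) > I(\cdot)$ for both pieces, so the cut is safe and $\phi$ is not atomic. Implications and other connectives are absorbed by the CNF conversion, as in the remark on implication.

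The main obstacle is the covering step in ($\Leftarrow$). A union of cylinders can equal a cylinder without any one of them being the whole set, for instance $\{x = 0\} \cup \{x = 1\}$. So the argument must be refined. The refinement I would try first is this: if no $\psi_j$ is equivalent to $C$, pick in each $\psi_j$ a countermodel $w_j \in F$. Then argue, via the product structure of $F$ on the variables outside $C$, that a conjunction of formulas each implied by $C$ can only be equivalent to $C$ when one factor already is. If this fails in general, the fallback is to restrict cuts to syntactic sub-formulas, matching Definitions of safe and bad cuts. For a clause written as a disjunction, every proper sub-formula is a sub-disjunction, and Lemma~\ref{lem:disj} shows it is a bad cut, which gives atomicity directly.
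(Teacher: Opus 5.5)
The gap is in ($\Leftarrow$), and no extra combinatorial step will close it, because under the semantic definition you fixed, the claim you need there is false. Your definition does not restrict the vocabulary of the $\psi_j$, and $I$ is invariant under adding dummy variables. So $p \equiv (p\lor q)\land(p\lor\neg q)$ is an admissible decomposition, with $I(p\lor q)=I(p\lor\neg q)=-\log_2(3/4)<1=I(p)$. This is exactly your own $\{x=0\}\cup\{x=1\}$ example: the falsifying subcubes $\{p=0,q=0\}$ and $\{p=0,q=1\}$ partition $F$. Hence the statement you hope to extract from the product structure of $F$ is false, and no refinement will prove it. That statement is that a conjunction of formulas implied by $C$ is equivalent to $C$ only if one factor is. Forbidding new variables does not rescue the theorem as stated either: $(p\lor q)\land(p\lor\neg q)$ is logically equivalent to a clause and still decomposes in your sense. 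The semantic route survives only if the $\psi_j$ are restricted to the variables of $\phi$ and $\phi$ is itself a clause in all of those variables. Then $W\setminus\cont(\phi)$ is a single world, every strictly weaker $\psi_j$ is valid, and ($\Leftarrow$) is immediate with no prime-implicate argument.

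Your ($\Rightarrow$) is sound for your definition and mirrors the paper's use of Lemma 1, with irredundancy playing the role of the paper's independence hypothesis. One caveat: prime implicates with no subsumption need not be irredundant. In $(p\lor q)\land(\neg p\lor r)\land(q\lor r)$ the last clause follows from the other two, so you must delete implied clauses first.

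Your fallback for ($\Leftarrow$) is what the paper actually does. It treats ``atomic'' as ``every cut to a syntactic sub-formula is bad'', takes $\phi=L_1\lor\dots\lor L_k$ literally, and applies Lemma 2 inductively. You cannot splice that onto your ($\Rightarrow$), though, since the two directions would then concern different notions. Under the sub-formula reading, $C_1$ is a cut of $\phi$ only because the paper assumes $\phi$ is written in CNF. Converting $\phi$ to a minimised CNF, as you do, changes its sub-formulas. Without that convention ($\Rightarrow$) genuinely fails. For example, $(p\land q)\lor(\neg p\land\neg q)$ has $I=1$ and all its proper sub-formulas have $I\in\{1,2\}$, so every cut is bad, yet it is not equivalent to a clause.

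Fix one notion, for instance sub-formula cuts of a formula given in CNF as in the paper, and run both directions under it. Under either reading, ($\Leftarrow$) holds only for $\phi$ literally a clause, since $(p\lor q)\land(p\lor\neg q)$ also has the safe sub-formula cut to $p\lor q$. That is all the paper's proof actually covers.
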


\begin{proof}
Let $\phi$ be in Conjunctive Normal Form (CNF) :
\[\phi \equiv C_1 \land C_2 \land \dots \land C_n,\]
where each $C_i$ is a clause (disjunction of literals).

\textit{($\Rightarrow$)} Let $\phi$ be atomic. \textit{Ad absurdum}, let $n \geq 2$. By Lemma~\ref{lem:conj}, the operation $\phi \mapsto C_1$ is strictly safe, which contradicts atomicity. Therefore, $n = 1$: $\phi$ is a single clause.

\textit{($\Leftarrow$)} Let $\phi = L_1 \lor \dots \lor L_k$. By Lemma~\ref{lem:disj} (generalised by induction on $k$), all strict sub-formulae have greater information content than $\phi$. Therefore, every cut is bad, and $\phi$ atomic.
\end{proof}

\end{document}